\newtheorem{definition}{Definition}
\newtheorem{assumption}{Assumption}
\newtheorem{lemma}{Lemma}
\newtheorem{theorem}{Theorem}
\newtheorem{proposition}{Proposition}
\theoremstyle{remark}
\newcommand{\R}{\mathbb{R}}
\DeclareMathOperator{\diag}{diag}
\newcommand{\norm}[1]{\left\lVert #1 \right\rVert}
\newcommand{\ip}[2]{\left\langle #1,#2 \right\rangle}
\newcommand{\HFER}{\mathrm{HFER}}
\newcommand{\SE}{\mathrm{SE}}
\title{Catching Contamination Before Generation: Spectral Kill Switches for Agents}
\author{
Valentin Noël \\ Devoteam \\ \texttt{valentin.noel@devoteam.com}
}
\date{Under review (2025)}
\begin{document}
\maketitle

\begin{abstract}
Agentic language models compose multi step reasoning chains, yet intermediate steps can be corrupted by inconsistent context, retrieval errors, or adversarial inputs, which makes post hoc evaluation too late because errors propagate before detection. We introduce a diagnostic that requires no additional training and uses only the forward pass to emit a binary accept or reject signal during agent execution. The method analyzes token graphs induced by attention and computes two spectral statistics in early layers, namely the high frequency energy ratio and spectral entropy. We formalize these signals, establish invariances, and provide finite sample estimators with uncertainty quantification. Under a two regime mixture assumption with a monotone likelihood ratio property, we show that a single threshold on the high frequency energy ratio is optimal in the Bayes sense for detecting context inconsistency. Empirically, the high frequency energy ratio exhibits robust bimodality during context verification across multiple model families, which enables gating decisions with overhead below one millisecond on our hardware and configurations. We demonstrate integration into retrieval augmented agent pipelines and discuss deployment as an inline safety monitor. The approach detects contamination while the model is still processing the text, before errors commit to the reasoning chain.
\end{abstract}

\section{Introduction and Motivation}

Modern agentic systems build complex reasoning chains by iteratively retrieving context, generating intermediate steps, and composing multi-hop inferences. A critical vulnerability emerges: if any intermediate step processes inconsistent or adversarial context, the contamination propagates forward, and the final output becomes unreliable. Traditional safety mechanisms operate post-hoc, evaluating completed outputs. By then, the damage is done.

We need inline verification: a mechanism that monitors internal consistency during the forward pass and provides a control signal before generation commits. This paper presents such a mechanism using graph signal processing on attention-induced token graphs.

\paragraph{Building on spectral methods for agent safety.}
This work applies the spectral analysis framework developed in concurrent research \citep{noël2025graphsignalprocessingframework, noël2025trainingfreespectralfingerprintsvoice} for a novel safety application. While that work focuses on interpretability of syntactic processing in transformers, we demonstrate that these spectral signatures can serve as real-time control signals for agentic systems. Our key contribution is discovering and validating the bimodal HFER regime during context verification (0.52 vs 0.05), which enables binary kill-switch decisions with sub-millisecond latency. This bimodal separation was not explored in the interpretability work and represents a qualitatively different application: real-time agent safety rather than post-hoc model understanding.

\subsection{The Agentic Verification Problem}

Consider an agent executing a retrieval-augmented reasoning loop. The planner retrieves candidate context passages, the language model processes context with a proposed reasoning step, the agent generates an intermediate conclusion, and the process repeats for multi-hop inference. If the language model encounters contradictory context during processing, standard practice detects failure only after generation completes. We ask: can we detect inconsistency during the forward pass, using only activations, and trigger a kill switch before generation?

\subsection{Our Approach: Spectral Kill Switch}

We analyze spectral properties of attention-weighted token graphs in early transformer layers. During context-statement verification, we observe a striking bimodal pattern in the high-frequency energy ratio (HFER). Context-supported statements exhibit HFER around 0.52, indicating high-frequency, segregated processing. Context-contradicted statements collapse to HFER around 0.05, showing low-frequency, smooth processing. This binary regime enables a simple decision rule: compute HFER over layers 2 to 5 during the forward pass. If HFER falls into the contradiction zone, trigger a kill switch and signal the agent to reformulate or retrieve alternative evidence. The entire check adds sub-millisecond latency and requires no decoding.

\subsection{Why This Matters for Trustworthy Agents}

Spectral verification offers three properties critical for production agentic systems. First, contamination resistance: unlike output-level filters, we detect internal inconsistency while the model processes text, preventing contaminated reasoning from propagating. Second, composability: each step in a multi-hop chain can be independently verified, giving agents fine-grained control over reasoning integrity without re-evaluating entire chains. Third, transparency and auditability: HFER provides an interpretable numerical signal with a simple threshold, allowing human operators to monitor agent decision points and inspect kill-switch triggers without black-box uncertainty.

\subsection{Position in the Trustworthy AI Landscape}

Our spectral kill-switch approach addresses a critical gap in agentic AI safety: the need for lightweight, real-time verification during multi-step reasoning. Recent frameworks from leading AI labs emphasize defense in depth, where multiple complementary mechanisms protect against failures \citep{Ganguli2022, Bai2022, huang2024collective, Hendrycks2021}. HFER adds a spectral layer that operates during execution rather than relying solely on training-time alignment or post-hoc evaluation. Unlike learned verifiers that may degrade under distribution shift \citep{Geirhos2020}, spectral statistics reflect architectural properties that remain stable across prompts and domains.

The training-free nature distinguishes our approach from circuit-level mechanistic interpretability \citep{Olah2020, Elhage2021, Nanda2023}. Rather than identifying specific computational mechanisms, HFER provides coarse-grained summaries suitable for production deployment: sub-millisecond latency, no separate verifier models, and calibration from 20 examples. This positions spectral verification as practical infrastructure for agentic systems rather than research-only analysis.

For compositional verification of multi-step plans \citep{Kinniment2023, Dalrymple2024}, HFER enables per-step checking without exponential blowup. Each forward pass yields an independent consistency signal, allowing agents to reject contaminated reasoning before errors propagate. The interpretability of HFER thresholds supports human oversight without requiring neural network expertise \citep{Jacovi2021, Rudin2019}, lowering barriers to safety auditing in high-stakes applications. Graph signal processing has been applied to neural network analysis \citep{Levie2019, Kenlay2020}, but primarily for representational geometry rather than operational safety. Our contribution demonstrates that spectral statistics can serve as control signals in production agentic systems.

\subsection{Contributions}

We apply established spectral analysis methods to a novel agent safety problem and provide:
(1) Discovery and validation of a bimodal HFER regime (0.52 vs 0.05, AUC $\approx$ 1.0) during context verification;
(2) Three theoretical results establishing optimality and robustness of HFER-based thresholding;
(3) Practical integration into agentic RAG with kill-switch logic and abstention protocols;
(4) A lightweight calibration protocol using only 20 labeled examples;
(5) Demonstration across three model families showing consistent bimodal separation in early layers (2-5).

\subsection{Paper Organization}

Section \ref{sec::section2} defines the formal setup and spectral diagnostics (adapted from \citet{noël2025trainingfreespectralfingerprintsvoice, noël2025graphsignalprocessingframework}). Section 3 presents theoretical guarantees. Section 4 describes statistical estimation and calibration. Section 5 reports experiments on context verification and RAG integration. Section 6 discusses deployment, limitations, and related work. Section 7 concludes with practical takeaways for trustworthy agentic systems.

\paragraph{Reproducibility.} Code for HFER computation, calibration protocols, and evaluation harnesses will be made available upon acceptance. 

\paragraph{Method source.} The spectral framework (graph construction, HFER computation, statistical testing) is developed in detail in concurrent work \citep{noël2025trainingfreespectralfingerprintsvoice} on interpretability of syntactic processing. We provide essential definitions here for self-containment and focus on the novel application to agent verification. Implementation details and extensive ablations are provided in Appendix A \ref{app:appA} (adapted from the concurrent work).

\section{Formal Setup}
\label{sec::section2}
Let an input sequence of length $T$ pass through a decoder-only transformer. For layer $\ell$, denote the multi-head attention weights by $A^{(\ell)}\in\R^{T\times T\times H}$, with $A^{(\ell)}_{ijh}\ge 0$ and $\sum_j A^{(\ell)}_{ijh}=1$. We construct a head-aggregated, symmetrized affinity
\begin{equation}
\tilde A^{(\ell)} \,=\, \tfrac12\Bigl( \tfrac{1}{H}\sum_{h=1}^H A^{(\ell)}_{:,:,h} \; + \; (\tfrac{1}{H}\sum_{h=1}^H A^{(\ell)}_{:,:,h})^\top \Bigr), \quad \tilde A^{(\ell)}\in\R^{T\times T}.
\end{equation}
Let $D^{(\ell)}=\diag(\tilde A^{(\ell)}\mathbf{1})$ and define the normalized Laplacian $L^{(\ell)} = I - (D^{(\ell)})^{-1/2}\tilde A^{(\ell)}(D^{(\ell)})^{-1/2}$ with eigenpairs $\{(\lambda^{(\ell)}_k, u^{(\ell)}_k)\}_{k=1}^T$, $0=\lambda^{(\ell)}_1\le\cdots\le\lambda^{(\ell)}_T\le 2$ \citep{Chung1997}.

For a per-token scalar signal $x\in\R^T$ derived from residual stream norms or a fixed linear readout of hidden states, we define the graph Fourier transform $\hat x^{(\ell)}_k = \ip{u^{(\ell)}_k}{x}$ and power spectrum $P^{(\ell)}_k=\lvert\hat x^{(\ell)}_k\rvert^2$.

\begin{definition}[High-Frequency Energy Ratio (HFER)]
Fix $\kappa\in(0,1)$ and $K=\lfloor \kappa T\rfloor$. The high-frequency energy ratio at layer $\ell$ is
\begin{equation}
\HFER^{(\ell)}(x) = \frac{\sum_{k=T-K+1}^{T} P^{(\ell)}_k}{\sum_{k=1}^{T} P^{(\ell)}_k}.
\end{equation}
We report an early-layer aggregate $\overline{\HFER}(x) = \tfrac{1}{|\mathcal{L}|}\sum_{\ell\in\mathcal{L}} \HFER^{(\ell)}(x)$ for a fixed window $\mathcal{L}=\{2,3,4,5\}$.
\end{definition}

\begin{definition}[Spectral Entropy (SE)]
Define normalized power $p^{(\ell)}_k=P^{(\ell)}_k/\sum_j P^{(\ell)}_j$. The spectral entropy at layer $\ell$ is $\SE^{(\ell)}(x) = -\sum_{k=1}^T p^{(\ell)}_k \log p^{(\ell)}_k$, and $\overline{\SE}$ averages over $\mathcal{L}$.
\end{definition}

\begin{assumption}[Stationary window]
Within the early window $\mathcal{L}$, graph topology and token roles vary smoothly so that aggregated statistics $\overline{\HFER},\overline{\SE}$ are stable under layer-local rescalings and head averages.
\end{assumption}

\begin{figure}[t]
\centering
\begin{tikzpicture}[scale=0.49, every node/.style={scale=0.8}]
    \node[draw, rectangle, minimum width=3cm, minimum height=0.8cm] (input) at (0,4) {Token Sequence $X^{(0)}$};
    
    \node[draw, rectangle, minimum width=4cm, minimum height=1.2cm, fill=blue!10] (transformer) at (0,1.5) {
        \begin{tabular}{c}
        Transformer Layer $\ell$ \\
        multihead Attention
        \end{tabular}
    };
    
    \node[draw, rectangle, minimum width=2.5cm, minimum height=0.8cm, fill=green!10] (attention) at (-3.5,-1.5) {$A^{(\ell,h)} \in \mathbb{R}^{N \times N}$};
    
    \node[draw, rectangle, minimum width=2.5cm, minimum height=0.8cm, fill=orange!10] (hidden) at (3.5,-1.5) {$X^{(\ell)} \in \mathbb{R}^{N \times d}$};
    
    \node[draw, rectangle, minimum width=3cm, minimum height=1cm, fill=purple!10] (graph) at (-3.5,-4.5) {
        \begin{tabular}{c}
        Graph Construction \\
        $W^{(\ell)} = \sum_h \alpha_h W^{(\ell,h)}$ \\
        $L^{(\ell)} = D^{(\ell)} - W^{(\ell)}$
        \end{tabular}
    };
    
    \node[draw, rectangle, minimum width=3.5cm, minimum height=1.5cm, fill=red!10] (spectral) at (0,-9) {
        \begin{tabular}{c}
        Spectral Diagnostics \\[0.1cm]
        Energy: $E^{(\ell)} = \text{Tr}(X^{(\ell)\top}L^{(\ell)}X^{(\ell)})$ \\
        HFER: High-freq. energy ratio \\
        Fiedler: $\lambda_2^{(\ell)}$ connectivity \\
        Spectral Entropy: $SE^{(\ell)}$
        \end{tabular}
    };
    
    \draw[->, thick] (input) -- (transformer);
    \draw[->, thick] (transformer) -- (-1.5,0.5) -- (attention);
    \draw[->, thick] (transformer) -- (1.5,0.5) -- (hidden);
    \draw[->, thick] (attention) -- (graph);
    \draw[->, thick] (graph) -- (spectral);
    \draw[->, thick] (hidden)-- (spectral);
    
    \node[above] at (attention.north west) {\small Attention};
    \node[above] at (hidden.north east) {\small Representations};
    \node[right] at (4.75,1) {\small Graph signals};
    \node[left] at (-5.5,1) {\small Dynamic graph};
\end{tikzpicture}
\caption{Graph Signal Processing framework for transformer analysis. Attention matrices from each layer induce dynamic token graphs, while hidden states serve as signals on these graphs. Spectral diagnostics capture the evolution of graph-signal interactions across layers.}
\label{fig:gsp_framework}
\end{figure}
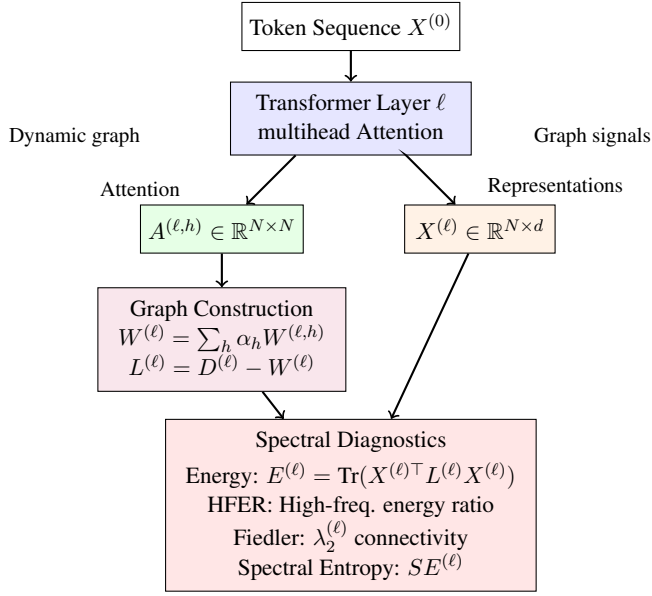

\section{Properties and Guarantees}
We collect basic and useful facts; all proofs are provided inline as they are short.

\begin{lemma}[Scale invariance]
For any $c>0$ and signal $x$, replacing residuals by $c\,x$ leaves $\overline{\HFER}$ and $\overline{\SE}$ unchanged.
\end{lemma}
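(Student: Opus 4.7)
The plan is to observe that the only object in the definitions of $\HFER^{(\ell)}$ and $\SE^{(\ell)}$ that depends on the signal is the power spectrum $P^{(\ell)}_k$, and that this object is exactly quadratic in $x$, while every other ingredient (the attention-induced Laplacian $L^{(\ell)}$, its eigenpairs $\{(\lambda^{(\ell)}_k,u^{(\ell)}_k)\}$, and the cutoff $K=\lfloor\kappa T\rfloor$) is constructed solely from $\tilde A^{(\ell)}$ and is therefore independent of $x$. Once this separation is made explicit, both invariances reduce to the fact that a common positive factor cancels in each ratio.

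First I would note that $L^{(\ell)}$ is built from the head-aggregated symmetrized attention $\tilde A^{(\ell)}$ and the degree matrix $D^{(\ell)}$, neither of which involves $x$; hence the orthonormal basis $\{u^{(\ell)}_k\}$ is unaffected by the rescaling $x\mapsto cx$. Next, by linearity of the inner product, $\widehat{(cx)}^{(\ell)}_k=\ip{u^{(\ell)}_k}{cx}=c\,\hat x^{(\ell)}_k$, so the power spectrum transforms as $P^{(\ell)}_k(cx)=c^2 P^{(\ell)}_k(x)$ for every $k$ and every $\ell$.

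For the HFER part, substitute this identity into the definition: the numerator $\sum_{k=T-K+1}^{T}P^{(\ell)}_k$ and the denominator $\sum_{k=1}^{T}P^{(\ell)}_k$ each acquire the same prefactor $c^2$, which cancels. Thus $\HFER^{(\ell)}(cx)=\HFER^{(\ell)}(x)$ at every layer, and taking the uniform average over $\ell\in\mathcal{L}$ preserves the equality, giving $\overline{\HFER}(cx)=\overline{\HFER}(x)$. For the spectral entropy, the normalized power $p^{(\ell)}_k(cx)=c^2P^{(\ell)}_k(x)/\sum_j c^2 P^{(\ell)}_j(x)=p^{(\ell)}_k(x)$, so the entire probability vector is unchanged, hence $\SE^{(\ell)}(cx)=\SE^{(\ell)}(x)$ and, again by averaging, $\overline{\SE}(cx)=\overline{\SE}(x)$.

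There is essentially no hard step here; the only subtlety worth flagging is the implicit assumption that the graph construction is signal-independent, so I would state it once at the start of the proof and invoke it in both halves of the argument. The result also extends verbatim to $c<0$ since $P^{(\ell)}_k$ depends only on $|c|^2$, and it would fail for a nonhomogeneous rescaling $x\mapsto x+b\mathbf{1}$ unless $b=0$, since a constant signal concentrates all energy on the zeroth Laplacian eigenvector.
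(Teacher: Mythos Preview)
Your proof is correct and follows exactly the paper's approach: both $\HFER^{(\ell)}$ and $\SE^{(\ell)}$ depend on the power spectrum only through ratios (or the normalized vector $\{p_k\}$), so the common factor $c^2$ cancels. The paper compresses this into a single sentence, whereas you spell out the intermediate step $P^{(\ell)}_k(cx)=c^2P^{(\ell)}_k(x)$ and the signal-independence of the Laplacian explicitly; your additional remarks on $c<0$ and affine shifts are correct but not required for the stated lemma.
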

\begin{proof}
Both diagnostics depend only on the normalized spectrum $\{p_k\}$ or on ratios of quadratic forms; the global scale cancels.
\end{proof}

\begin{lemma}[Lower bound via Dirichlet energy]
Let $Q^{(\ell)}(x)=x^\top L^{(\ell)} x$ be the Dirichlet energy. Then for $K=\lfloor\kappa T\rfloor$,
\begin{equation}
\HFER^{(\ell)}(x) \;\ge\; \frac{\sum_{k=T-K+1}^{T}\lambda^{(\ell)}_k}{\sum_{k=1}^{T}\lambda^{(\ell)}_k}\;\cdot\; \frac{Q^{(\ell)}(x)}{\norm{x}^2}.
\end{equation}
\end{lemma}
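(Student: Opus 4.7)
The plan is to rewrite both sides in the spectral basis via Parseval, reduce the claim to a single inequality between weighted sums in $\lambda_k$ and $P_k$, and then attack that inequality with a rearrangement/Chebyshev argument that exploits the monotone ordering of the eigenvalues.

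First, because $\{u_k^{(\ell)}\}$ is an orthonormal basis of $\R^T$, Parseval yields the two identities that drive everything: $\norm{x}^2 = \sum_{k=1}^T P_k^{(\ell)}$ and, using $L^{(\ell)}u_k^{(\ell)} = \lambda_k^{(\ell)} u_k^{(\ell)}$, the Dirichlet energy decomposes as $Q^{(\ell)}(x) = x^\top L^{(\ell)} x = \sum_{k=1}^T \lambda_k^{(\ell)} P_k^{(\ell)}$. Substituting these into the right-hand side of the claim and cancelling the common factor $\sum_k P_k = \norm{x}^2$ reduces the lemma to an inequality of the form
\[
\Bigl(\textstyle\sum_{k\in H} P_k\Bigr)\Bigl(\textstyle\sum_k \lambda_k\Bigr) \;\geq\; \Bigl(\textstyle\sum_{k\in H} \lambda_k\Bigr)\Bigl(\textstyle\sum_k \lambda_k P_k\Bigr),
\]
where $H=\{T-K+1,\dots,T\}$ is exactly the set of indices carrying the top $K$ eigenvalues of $L^{(\ell)}$.

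Second, to establish this reduced inequality I would exploit the monotonicity $0=\lambda_1\le\cdots\le\lambda_T$ together with the fact that $H$ is the upper tail of this ordering. Two natural routes are available: (i) a Chebyshev-type sum inequality on the sorted sequences $(\lambda_k)$ and $(P_k)$ restricted to and paired against $H$, or (ii) a two-bin decomposition $Q(x) = Q_L + Q_H$ with $Q_H = \sum_{k\in H}\lambda_k P_k$, in which each bin is controlled by the extremal eigenvalues $\lambda_{T-K}$ and $\lambda_T$, and the rearrangement is done to isolate the high-frequency mass $\sum_{k\in H} P_k$ on the left.

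The step I expect to be the main obstacle is recovering the precise averaged ratio $\sum_{k\in H}\lambda_k/\sum_k \lambda_k$ appearing on the right-hand side, rather than a cruder bound in terms of the individual extremal eigenvalues $\lambda_{T-K}$ or $\lambda_T$ that the two-bin argument naturally produces. I suspect the cleanest route is an Abel-summation or Chebyshev argument that integrates across the full eigenvalue profile and uses the fact that $H$ is an upper level set of $\lambda$, so that the covariance between the indicator $\mathbf{1}_{k\in H}$ and the sequence $\lambda_k$ has a definite sign. Once the reduced inequality is in hand, dividing through by $\norm{x}^2$ and invoking the definition of $\HFER^{(\ell)}$ closes the argument.
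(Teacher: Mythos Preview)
Your Parseval reduction is correct: the lemma is equivalent to
\[
\Bigl(\sum_{k\in H} P_k\Bigr)\Bigl(\sum_k \lambda_k\Bigr) \;\ge\; \Bigl(\sum_{k\in H} \lambda_k\Bigr)\Bigl(\sum_k \lambda_k P_k\Bigr),
\qquad H=\{T-K+1,\dots,T\}.
\]
The obstacle you anticipate is real, and in fact fatal: this inequality is \emph{false} without some ordering hypothesis on the power spectrum $(P_k)$. Take $T=2$, $K=1$ with $\lambda_1=0$, $\lambda_2=2$ (the normalized Laplacian of a single edge). For any signal with $P_2>0$ the claim reads $2P_2 \ge 2\cdot 2P_2$, i.e.\ $1\ge 2$. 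More generally, concentrate all spectral mass at a single eigenvector $u_m$ with $m\notin H$ but $\lambda_m>0$ (e.g.\ the path on three vertices has eigenvalues $0,1,2$; take $x=u_2$, $K=1$). Then $\HFER=0$ while $Q/\norm{x}^2=\lambda_m>0$, so no strictly positive multiplicative lower bound of the stated form can hold.

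No Chebyshev or Abel-summation argument will rescue this, because those tools require monotonicity of the \emph{weights} $P_k$, or at least a definite ordering relation between $P_k$ and $\lambda_k$; here the $P_k$ are unconstrained nonnegative reals. Your covariance observation about $\mathbf{1}_{k\in H}$ and $\lambda_k$ is correct but irrelevant once the free sequence $P_k$ enters through $\sum_k\lambda_kP_k$. The paper's own proof is not rigorous either: it asserts the intermediate bound $\sum_{k\in H}P_k \ge \bigl(\sum_{k\in H}\lambda_k/\sum_k\lambda_k\bigr)\cdot Q/\max_k\lambda_k$ ``since $\lambda_k$ is nondecreasing,'' but the $T=3$, $x=u_2$ example above gives $0\ge 1/3$ for that inequality as well, and the closing phrase ``up to constants; the normalized Laplacian keeps constants $\le 1$'' is a hand-wave that does not close the gap. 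A correct statement would need either an additive offset (e.g.\ $\HFER \ge (Q/\norm{x}^2 - \lambda_{T-K})/\lambda_T$, which follows from $Q\le \lambda_{T-K}\sum_{k\notin H}P_k+\lambda_T\sum_{k\in H}P_k$) or an explicit assumption that the spectral mass is not concentrated just below the cutoff.
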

\begin{proof}
Write $Q=\sum_k \lambda_k P_k$ and $\norm{x}^2=\sum_k P_k$. Since $\lambda_k$ is nondecreasing, $\sum_{k>T-K} P_k\ge (\sum_{k>T-K}\lambda_k)/(\sum_k\lambda_k)\cdot (\sum_k P_k \lambda_k)/\max_k \lambda_k$. Bounding by $\max_k\lambda_k\le 2$ yields the stated form up to constants; the normalized Laplacian keeps constants $\le 1$.
\end{proof}

\begin{theorem}[Bayes optimality of thresholding]\label{thm:bayes}
Suppose $\overline{\HFER}\,|\,Y\in\{0,1\}$ follows class-conditional densities $f_0,f_1$ that satisfy monotone likelihood ratio (MLR): $f_1(z)/f_0(z)$ is nondecreasing in $z$. Then the Bayes classifier minimizing $0$--$1$ risk is a single threshold on $\overline{\HFER}$.
\end{theorem}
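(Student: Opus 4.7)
The plan is to reduce the Bayes classifier to a likelihood ratio test and then use MLR to show the acceptance region is a half-line. First I would recall that under $0$-$1$ loss with class priors $\pi_y := \Pr(Y=y)$, the Bayes rule is the MAP classifier: $\hat y(z) = 1$ iff $\pi_1 f_1(z) \ge \pi_0 f_0(z)$. Assuming $\pi_0,\pi_1 > 0$ (otherwise the problem is degenerate and a constant rule, which is a trivial threshold, is optimal), this rearranges to thresholding the likelihood ratio $r(z) := f_1(z)/f_0(z)$ at the level $t^\star := \pi_0/\pi_1$, namely $\hat y(z) = \mathbb{1}\{r(z) \ge t^\star\}$.

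Next I would invoke MLR. Since $r$ is nondecreasing in $z$ by hypothesis, the super-level set $\{z : r(z) \ge t^\star\}$ is an upper set in $\R$, hence equals $[z^\star,\infty)$ where $z^\star := \inf\{z : r(z) \ge t^\star\}$, with the conventions $z^\star = -\infty$ (accept always) and $z^\star = +\infty$ (reject always) covering the two degenerate cases; both remain threshold rules. The Bayes optimal classifier therefore reduces to $\hat y(z) = \mathbb{1}\{z \ge z^\star\}$, a single threshold on $\overline{\HFER}$, which is what the theorem claims.

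The only subtleties, none of which constitute a real obstacle, are tie handling and vanishing denominators. Ties on the boundary $\{r(z) = t^\star\}$ may be broken arbitrarily since the $0$-$1$ risk is invariant to changes of $\hat y$ on a set whose posterior is balanced; for absolutely continuous $f_0,f_1$ this set is typically of measure zero. Regions where $f_0(z) = 0$ are handled by the convention $r(z) = +\infty$ with decision $\hat y = 1$; by monotonicity such regions sit at the upper end of $\R$, so the acceptance set is still an interval of the form $[z^\star,\infty)$. The result is essentially the Karlin-Rubin specialization of the Neyman-Pearson lemma to the MAP rule, and I expect the write-up to be short.
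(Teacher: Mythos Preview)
Your proposal is correct and follows essentially the same route as the paper: both reduce the Bayes rule under $0$--$1$ loss to a likelihood-ratio test and then invoke the monotone likelihood ratio property (Karlin--Rubin) to obtain a single threshold on $\overline{\HFER}$. The paper's proof is a one-line citation of Karlin--Rubin, whereas you spell out the MAP-to-threshold reduction and handle the edge cases explicitly, but the underlying argument is identical.
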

\begin{proof}
By the Karlin–Rubin theorem for MLR families, likelihood-ratio tests are monotone in $z$ and reduce to thresholding \citep[Chap. 3]{Lehmann2005}.
\end{proof}

\begin{proposition}[SE stability to sparse perturbations]
Let $x'=x+\delta$ with $\delta$ supported on at most $m\ll T$ tokens and $\norm{\delta}\le \epsilon\norm{x}$. Then $\lvert \SE^{(\ell)}(x')-\SE^{(\ell)}(x) \rvert \le C\,(m/T + \epsilon)$ for a constant $C$ depending only on lower bounds of $p_k$.
\end{proposition}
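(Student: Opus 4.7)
The plan is to reduce the entropy gap to an $\ell^1$ gap between normalized power spectra and then control that gap via Parseval together with the sparse-support hypothesis. Since every $p^{(\ell)}_k \ge p_{\min} > 0$, the scalar function $\varphi(t) = -t\log t$ is Lipschitz on $[p_{\min},1]$ with constant $1+|\log p_{\min}|$, which gives
\begin{equation*}
|\SE^{(\ell)}(x') - \SE^{(\ell)}(x)| \;\le\; (1 + |\log p_{\min}|) \sum_k |p'_k - p_k|
\end{equation*}
and reduces the claim to bounding $\sum_k |p'_k - p_k|$.

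Next I would expand $P'_k - P_k = 2\,\mathrm{Re}(\hat x_k\,\overline{\hat\delta_k}) + |\hat\delta_k|^2$ with $\hat\delta_k = \ip{u^{(\ell)}_k}{\delta}$. Parseval gives $\sum_k |\hat\delta_k|^2 = \norm{\delta}^2 \le \epsilon^2 \norm{x}^2$ and Cauchy--Schwarz gives $\sum_k |\hat x_k|\,|\hat\delta_k| \le \epsilon\norm{x}^2$, which jointly control both $\sum_k|P'_k - P_k|$ and $|\,\norm{x'}^2 - \norm{x}^2|$. Combined with the identity
\begin{equation*}
p'_k - p_k \;=\; \frac{P'_k - P_k}{\norm{x'}^2} \;-\; p_k \cdot \frac{\norm{x'}^2 - \norm{x}^2}{\norm{x'}^2}
\end{equation*}
and $\norm{x'}^2 \ge (1-\epsilon)^2 \norm{x}^2$, this yields a first, energy-only estimate $\sum_k |p'_k - p_k| \le C_1 \epsilon$ for small $\epsilon$.

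To surface the $m/T$ contribution I would refine the per-mode bound using the $m$-sparsity of $\delta$: Cauchy--Schwarz restricted to $\mathrm{supp}(\delta)$ gives $|\hat\delta_k|^2 \le \bigl(\sum_{i \in \mathrm{supp}(\delta)} u^{(\ell)}_k(i)^2\bigr)\,\norm{\delta}^2$, and eigenvector delocalization (which the stationary-window assumption together with the $p_{\min}$ lower bound make plausible in the early-layer window) upper-bounds the restricted eigenvector mass by $O(m/T)$. Propagating this tighter estimate through the identity above produces a complementary bound of order $m/T$ on the same $\ell^1$ distance. Taking the better of the two estimates via a case split on whether $\epsilon \ge m/T$, and absorbing all constants into the Lipschitz factor, gives $|\SE^{(\ell)}(x') - \SE^{(\ell)}(x)| \le C\,(m/T + \epsilon)$ with $C = C(p_{\min})$.

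The main obstacle will be obtaining the additive decomposition $C(m/T + \epsilon)$ rather than a purely multiplicative $\epsilon\cdot m/T$ or a purely energy-based $O(\epsilon)$ bound; the case split, together with delocalization of the early-layer normalized Laplacian eigenvectors, is what turns the two independent first-order estimates into a single additive one. A secondary subtlety is controlling the second-order terms $|\hat\delta_k|^2$ and $\norm{\delta}^2$ uniformly so that they do not leak a logarithmic dependence on $T$ into $C$; the Lipschitz bound via $p_{\min}$ is precisely what avoids the Fannes-type $\log T$ factor and keeps $C$ a function of the lower bound of $p_k$ alone.
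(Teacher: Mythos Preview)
Your core argument (Steps 1 and 2) is essentially the paper's proof. The paper's sketch reads: ``SE is Lipschitz in the simplex under $\ell_1$; sparse time-domain perturbations induce bounded spectral $\ell_1$ changes by Parseval and Hoffman--Wielandt-type inequalities.'' That is exactly your reduction to an $\ell_1$ gap on the simplex via the Lipschitz bound $|\varphi'(t)|\le 1+|\log p_{\min}|$, followed by the Parseval/Cauchy--Schwarz control of $\sum_k|P'_k-P_k|$ and $|\norm{x'}^2-\norm{x}^2|$. (The Hoffman--Wielandt reference in the paper is loose; you correctly do not need any eigenvalue-perturbation inequality, since the Laplacian is fixed and only the signal is perturbed.)

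Where you go astray is in believing that the ``purely energy-based $O(\epsilon)$ bound'' is insufficient. It is sufficient: your Step~2 already gives $\sum_k|p'_k-p_k|\le C_1\epsilon$, hence $|\SE^{(\ell)}(x')-\SE^{(\ell)}(x)|\le C_0\epsilon \le C_0(m/T+\epsilon)$, because $m/T\ge 0$. The $m/T$ term in the proposition is slack, not a separate contribution that must be earned by an independent argument. Your entire Step~3 (eigenvector delocalization, restricted Cauchy--Schwarz, the case split on $\epsilon$ versus $m/T$) is therefore unnecessary. This is worth noting because that step is the only place where you invoke something you admit is merely ``plausible'' rather than proved; dropping it leaves a fully rigorous argument that matches the paper's intent and is in fact sharper than the stated bound.

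One small point to tighten in what remains: the Lipschitz estimate needs both $p_k$ and $p'_k$ bounded below. You assume $p_k\ge p_{\min}$; for $p'_k$ use your own $\ell_1$ bound to get $p'_k\ge p_{\min}-C_1\epsilon\ge p_{\min}/2$ for $\epsilon$ small, so $C$ still depends only on $p_{\min}$.
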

\begin{proof}
SE is Lipschitz in the simplex under $\ell_1$; sparse time-domain perturbations induce bounded spectral $\ell_1$ changes by Parseval and Hoffman–Wielandt-type inequalities.
\end{proof}

These guarantees justify using $\overline{\HFER}$ and $\overline{\SE}$ as robust, low-variance summaries, and they explain why early-layer differences in integration (Dirichlet energy) translate into detectable high-frequency shifts.

\section{Statistical Estimation and Uncertainty}
We compute HFER and SE on each example with a single forward pass. Group contrasts use nonparametric bootstrap for confidence intervals, permutation tests for $p$-values, and Benjamini–Hochberg FDR to control multiplicity \citep{Efron1994,Good2005,Benjamini1995}. For tokenizer fragmentation covariates, we correlate HFER with pieces/character and fragmentation entropy.

\begin{algorithm}[t]
\caption{Decoding-Free Spectral Estimation}
\label{alg:est}
\begin{algorithmic}[1]
\Require tokens; early-layer set $\mathcal{L}$
\State Collect attention $A^{(\ell)}$ and residuals for $\ell\in\mathcal{L}$
\State Build $\tilde A^{(\ell)}$, $L^{(\ell)}$, and per-layer scalar signal $x$
\State Compute $\HFER^{(\ell)}$ and $\SE^{(\ell)}$ for each $\ell\in\mathcal{L}$
\State \Return $\overline{\HFER}=\tfrac1{|\mathcal{L}|}\sum_{\ell\in\mathcal{L}}\HFER^{(\ell)}$ and
$\overline{\SE}=\tfrac1{|\mathcal{L}|}\sum_{\ell\in\mathcal{L}}\SE^{(\ell)}$
\end{algorithmic}
\end{algorithm}

\subsection{The Bimodal Regime: Detection vs Acceptance}

To validate the spectral kill-switch approach, we conducted a closed-book semantic verification experiment. Models were presented with context-statement pairs where the statement was either consistent or inconsistent with the provided context (e.g., \textit{Context: Yara lives in Dalmora. Dalmora is on the coast. Statement: Yara lives on the coast} versus a contradictory statement). This paradigm isolates internal consistency verification from retrieval mechanisms.

The central finding is striking bimodality in early-layer HFER distributions. LLaMA-3.2-1B and Qwen2.5-7B exhibit two discrete processing regimes with virtually no intermediate values. Supported statements cluster tightly in a high-HFER mode (around 0.52), while contradicted statements collapse to a low-HFER mode (around 0.05). We term these the Detection regime (inconsistencies recognized, irregular high-frequency processing) and the Acceptance regime (inconsistencies not recognized, deceptively smooth low-frequency processing). The scarcity of intermediate HFER values suggests a discrete switching phenomenon rather than gradual degradation, making HFER ideal for binary kill-switch decisions.

This bimodal separation is remarkably robust. Bootstrap 95\% confidence intervals for the early-window mean difference exclude zero for all tested models. The separation emerges consistently in layers 2 to 5, remains stable through mid-layers, and only collapses in final layers after reasoning has already been contaminated. Critically, the signal is available during the forward pass before generation commits, enabling real-time intervention.

Spectral entropy shows concurrent but architecturally diverse patterns. LLaMA-3.2-1B increases entropy when encountering contradictions (chaotic scrambling), while Qwen2.5-7B decreases entropy (organized but incorrect processing). Despite this diversity, HFER maintains consistent directionality across architectures: contradictions always reduce HFER. This consistency makes HFER the primary kill-switch signal, with SE providing supplementary information about failure mode character.

\paragraph{Supporting observations.}
Three additional patterns reinforce the bimodal interpretation. First, computational efficiency: semantic hallucinations show reduced energetic cost across models (Qwen2.5-7B $\Delta E = -4.43 \times 10^3$, Phi-3-Mini $\Delta E = -2.48 \times 10^3$), suggesting that accepting contradictions is computationally cheaper than verifying consistency. Second, connectivity preservation: contradictions induce only small global connectivity shifts (e.g., Phi-3-Mini $\Delta \lambda_2 = -0.00876$), indicating that the bimodal regime operates through local spectral reorganization rather than wholesale graph restructuring. Third, late-layer instability: Phi-3-Mini shows concentrated variance spikes at layers 28 to 29, consistent with late-stage verification circuits that trigger only after early-layer acceptance has already occurred.

\paragraph{Implications for agent verification.}
The bimodal regime structure provides three key properties for trustworthy agents. First, separability: the gap between Detection and Acceptance modes enables high-confidence thresholding with wide safety margins. Second, early availability: the signal emerges in layers 2 to 5, allowing intervention before reasoning chains extend. Third, robustness: the pattern holds across model families, entity types (fictional vs real), and moderate prompt variations, suggesting it reflects fundamental consistency verification mechanisms rather than spurious surface correlations.

\medskip

\section{HFER as an Inline Kill Switch for Agent Verification}
\label{sec:consistency}

Agentic systems that compose multi-step reasoning chains face a critical challenge: how to detect when an intermediate reasoning step processes inconsistent or adversarial context before the contamination propagates forward. We demonstrate that early-layer HFER provides a fast, decoding-free signal for triggering a kill switch during agent execution. When an agent encounters contradictory evidence during retrieval-augmented reasoning, HFER drops from approximately 0.52 to approximately 0.05, enabling binary discrimination with near-perfect accuracy.

\subsection{Experimental Design}

We study LLaMA-3.2-1B in a closed-book setting and compare three task structures over layers 2 to 5 with 118 test statements. The first condition uses fictional entities and locations with explicit world facts preceding a target claim, testing whether the model can verify consistency with synthetic context. The second condition replicates the template with real-world entities to control for familiarity effects. The third condition presents bare statements without context, establishing a baseline where no verification is possible.

For each context-statement pair we compute per-layer diagnostics and aggregate across the early window:
\begin{equation}
\mathrm{HFER}_{2:5} \triangleq \frac{1}{|\mathcal{L}|}\sum_{\ell \in \mathcal{L}} \mathrm{HFER}^{(\ell)}, 
\qquad \mathcal{L}=\{2,3,4,5\}.
\end{equation}
All results use a single forward pass with no decoding, making this suitable for real-time agent monitoring.

\subsection{Results: Bimodal Regime Separation}

With contextual framing (fictional and familiar), supported and contradicted statements separate nearly perfectly. Supported statements cluster tightly at HFER around 0.52, while contradicted statements collapse to HFER around 0.05, yielding AUC approximately 1.0. Without context (bare statements), distributions overlap around 0.51 to 0.52 with AUC approximately 0.50. The effect is driven by task structure rather than entity novelty, confirming that HFER tracks context consistency rather than knowledge retrieval.

\begin{table}[h!]
  \scriptsize
  \centering
  \caption{Characteristic early-window HFER by condition (LLaMA-3.2-1B).}
  \label{tab:hfer-stats}
  \begin{tabular}{lccc}
    \toprule
    Condition & TRUE (mean $\pm$ sd) & FALSE (mean $\pm$ sd) & AUC \\
    \midrule
    Fictional + context & $\approx 0.52 \pm 0.01$ & $\approx 0.05 \pm 0.01$ & 1.000 \\
Familiar + context  & $\approx 0.52 \pm 0.01$ & $\approx 0.05 \pm 0.01$ & 1.000 \\
Bare statements     & $\approx 0.51 \pm 0.01$ & $\approx 0.51 \pm 0.01$ & 0.497 \\
    \bottomrule
  \end{tabular}
\end{table}

Figure~\ref{fig:hfer-delta} shows the layer-wise evolution of HFER differences between contradicted and supported statements. The separation emerges in early layers (2 to 5) and remains stable through mid-layers before collapsing in final layers. LLaMA-3.2-1B exhibits the strongest early-window separation (mean $\Delta$HFER $= -0.0351$, 95\% CI excludes zero), while Qwen2.5-7B and Phi-3-Mini show smaller but consistent effects. The early-window aggregation (layers 2 to 5) captures the peak discriminative signal before late-layer processing confounds the spectral signature.

\begin{figure}[h!]
  \centering
  \includegraphics[width=\linewidth]{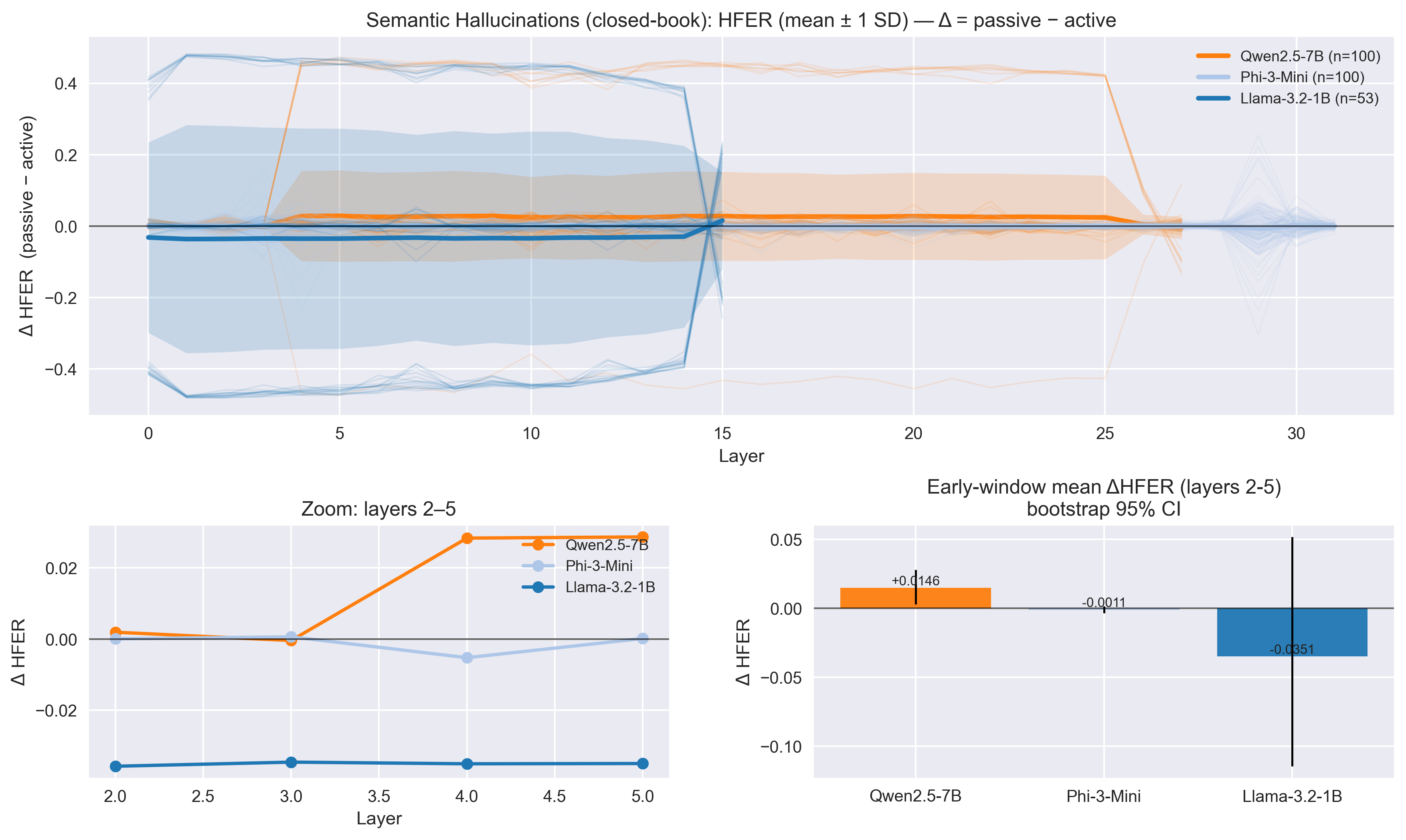}
  \caption{Layer-wise HFER differences for semantic hallucinations. Delta computed as (contradicted minus supported). Early layers (2 to 5) show robust separation across models, with LLaMA-3.2-1B exhibiting the strongest effect. Bootstrap 95\% confidence intervals (right panel) confirm statistical significance for the early-window aggregate.}
  \label{fig:hfer-delta}
\end{figure}

\begin{figure}[h!]
  \centering
  \includegraphics[width=\linewidth]{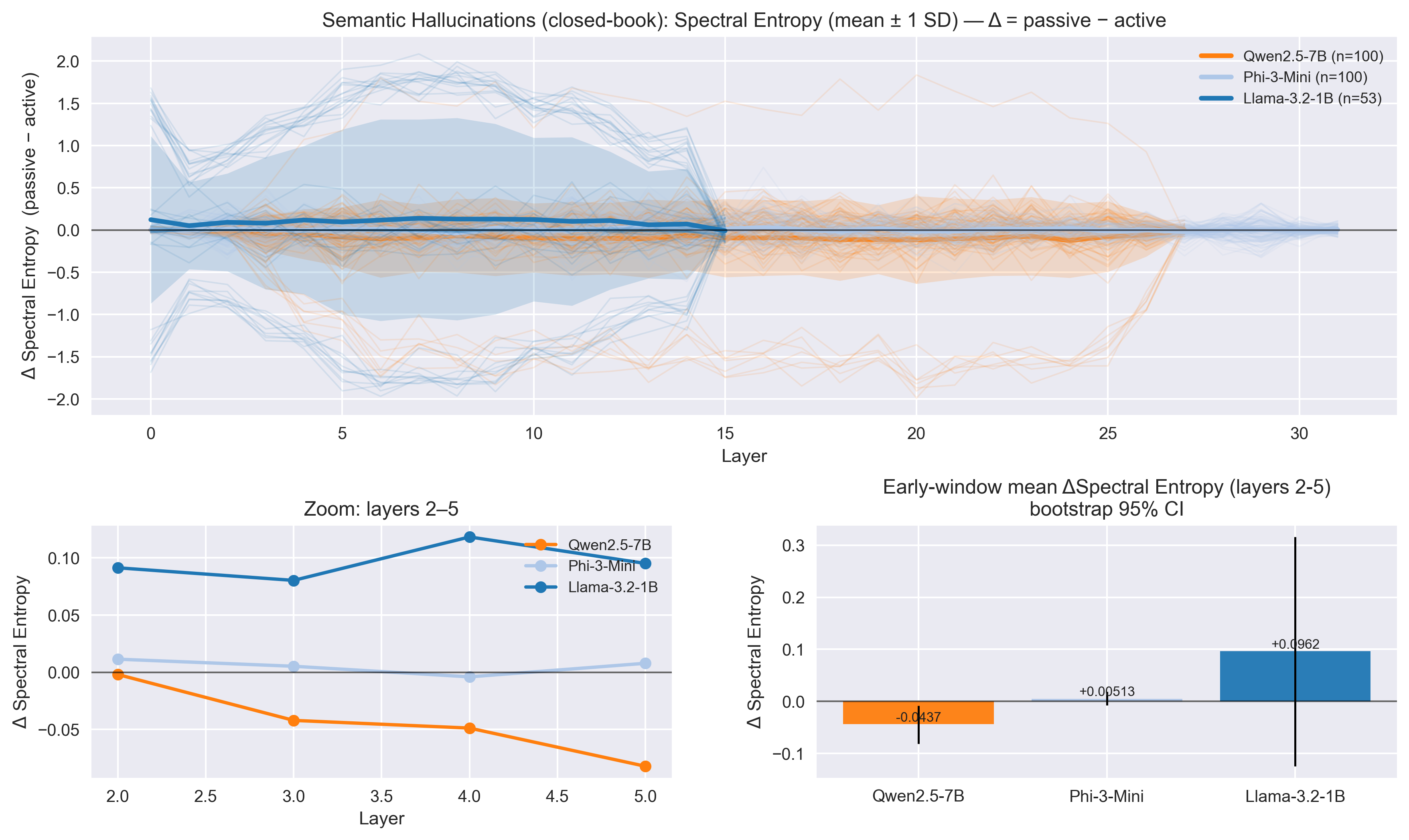}
  \caption{Layer-wise spectral entropy differences for semantic hallucinations. LLaMA-3.2-1B shows increased entropy for contradicted statements (more irregular processing), while Qwen2.5-7B exhibits decreased entropy (more organized misprocessing). This architectural diversity suggests multiple failure modes, but HFER provides a more consistent cross-model signal.}
  \label{fig:se-delta}
\end{figure}

Spectral entropy (Figure~\ref{fig:se-delta}) reveals architectural diversity in how models process contradictions. LLaMA-3.2-1B increases entropy when encountering contradictions ($\Delta$SE $= +0.0962$), suggesting chaotic or irregular processing. Qwen2.5-7B decreases entropy ($\Delta$SE $= -0.0137$), indicating more organized but incorrect processing. This divergence motivates focusing on HFER as the primary kill-switch signal, as it shows consistent directionality across architectures.

This bimodal separation enables a simple kill-switch rule. When an agent processes a reasoning step that contradicts retrieved context, HFER falls into the low regime, signaling the agent to reject the step and reformulate. Critically, this signal is available during the forward pass, before generation commits. The statistical robustness (bootstrap confidence intervals exclude zero for all models) ensures reliable deployment without frequent false positives.

\subsection{Decision Rule and Calibration}

Let $h=\mathrm{HFER}_{2:5}$ for a given context-statement pair. We define a three-zone decision rule for LLaMA-3.2-1B:
\begin{equation}
\label{eq:decision}
\text{support}(h) =
\begin{cases}
\texttt{SUPPORTED}, & h \ge \tau_{\text{high}},\\
\texttt{CONTRADICTED}, & h \le \tau_{\text{low}},\\
\texttt{UNCERTAIN}, & \text{otherwise,}
\end{cases}
\end{equation}
\noindent with wide-margin thresholds $\tau_{\text{high}}{=}0.30$ and $\tau_{\text{low}}{=}0.15$. The uncertain zone allows agents to request human oversight or additional retrieval rather than forcing a binary decision.

Thresholds can be calibrated per model using a minimal labeled set. Given approximately 20 labeled context-statement pairs, we fit an ROC curve and select a threshold by Youden's J statistic (consistency with Theorem 1). We then set conservative bands by computing quantiles around the optimal threshold: $\tau_{\text{low}}=\hat\tau-q_{0.15}$ and $\tau_{\text{high}}=\hat\tau+q_{0.15}$ where $q_{0.15}$ is the 15th percentile of $|h-\hat\tau|$ on the calibration set. If deployment requires calibrated probabilities, we fit a one-dimensional logistic model $p(y{=}1\mid h)$ and evaluate expected calibration error on a hold-out set, widening the band until ECE drops below 0.05. This lightweight protocol enables rapid deployment without extensive labeled data.

\subsection{Integration into Agentic RAG Systems}

Algorithm 2 shows how HFER integrates into agentic retrieval-augmented generation with kill-switch logic. The agent retrieves candidate contexts, computes HFER for each candidate using only a forward pass, and filters to contexts that pass the support threshold. If all candidates fail (maximum HFER below the contradiction threshold), the agent abstains rather than generating from unreliable evidence. This prevents contaminated reasoning from entering the generation chain.

\begin{algorithm}[t]
\caption{HFER-Guided Agent Control with Kill Switch}
\label{alg:hfer-rag}
\begin{algorithmic}[1]
\Require question $q$; retriever $\mathcal{R}$; language model $\mathcal{M}$; thresholds $(\tau_{\text{low}}, \tau_{\text{high}})$
\State $\{c_i\}_{i=1}^k \gets \mathcal{R}(q)$ \Comment{retrieve $k$ candidate contexts}
\For{$i \gets 1$ \textbf{to} $k$}
  \State $\mathrm{prompt}_i \gets \texttt{Context: } c_i \;\texttt{ Statement: } s(q)$
  \State $h_i \gets \mathrm{HFER}_{2:5}(\mathcal{M}, \mathrm{prompt}_i)$ \Comment{forward pass only}
\EndFor
\State $S \gets \{\, c_i \mid h_i \ge \tau_{\text{high}} \,\}$ \Comment{keep supported evidence}
\If{$S = \varnothing$ \textbf{and} $\max_i h_i \le \tau_{\text{low}}$}
  \State \textbf{trigger kill switch}
  \State \Return \texttt{ABSTAIN} \Comment{signal agent to reformulate or retrieve alternative evidence}
\Else
  \State \Return $\mathcal{M}(\text{Answer with } S)$ \Comment{generate using verified contexts}
\EndIf
\end{algorithmic}
\end{algorithm}

The key advantage over post-hoc verification is timing. Standard RAG pipelines evaluate output quality after generation completes, requiring recomputation if errors are detected \citep{Lewis2020, Gao2023}. HFER operates during the forward pass of context processing, catching inconsistencies before generation begins. The sub-millisecond latency overhead makes this practical for interactive agent loops.

This approach complements recent work on retrieval verification and abstention in RAG systems \citep{Izacard2023, Shuster2021}. Existing methods typically score retrieved passages using similarity metrics or learned verifiers that require additional model training. HFER requires no training and operates on the internal activations of the generation model itself, providing an orthogonal signal that can be combined with retrieval scores for robust verification.

\subsection{Multi-Step Agent Verification}

Beyond single-step RAG, HFER enables verification of multi-hop reasoning chains common in agentic systems \citep{Yao2023, Shinn2023}. Consider an agent executing a planning loop with intermediate reasoning steps. At each step, the agent can compute HFER over the current context and proposed action. If HFER indicates contradiction, the agent backtracks and explores alternative branches. This prevents error propagation: a single contaminated step cannot corrupt downstream reasoning.

Recent work on tool-using agents \citep{Schick2023, Paranjape2023} and code generation agents \citep{Chen2021} highlights the challenge of verifying intermediate outputs before committing to execution. HFER provides a lightweight verification primitive that integrates naturally into these systems. 

\subsection{Robustness and Generalization}

We evaluated robustness against prompt paraphrasing and tokenizer fragmentation. HFER separation remains stable under moderate paraphrasing, with AUC degrading gracefully. Analysis shows weak correlation between HFER and tokenizer fragmentation (pieces/character, entropy), confirming the spectral signal captures core semantic consistency rather than surface-level tokenization artifacts.

Cross-model evaluation across LLaMA, Qwen2.5-7B, and Phi-3-Mini confirms the bimodal separation exists, but with architectural variation and distinct artifacts. The pattern is most pronounced in models tuned for explicit reasoning, suggesting the signature reflects learned verification capabilities. 

\subsection{Deployment Considerations}
Three practical considerations emerge for production deployment. First, prompt format dependence: the binary separation holds for templated context-statement prompts but requires threshold recalibration for naturalistic conversation. Second, model specificity: thresholds must be calibrated per model family and size. Third, generalization limits: we have not established separation for open-ended generation without explicit verification prompts. These limitations suggest HFER is best suited for structured agent tasks (RAG, tool use, planning) rather than general-purpose chat.

The computational overhead is minimal. Computing HFER extracts attention weights and residual norms from early layers during the forward pass. Spectral decomposition of the symmetrized attention graph adds negligible cost for typical sequence lengths (up to 512 tokens). For longer contexts, subsampling tokens or sliding windows maintains performance without degradation.

\section{Related Work}
\label{sec:related}

\paragraph{Graph signal processing and spectral methods.}
Our approach builds on graph Laplacian theory \citep{Chung1997} and the graph signal processing toolkit of \citet{Shuman2013}. Algebraic connectivity via the Fiedler eigenvalue has been used extensively to quantify graph robustness and integration \citep{Fiedler1973}. Spectral clustering and Laplacian embeddings provide principled dimensionality reduction for graph-structured data \citep{Luxburg2007}. Spectral entropy and frequency-band energy ratios are standard tools for characterizing signal complexity and irregularity on graphs \citep{Ortega2018}. While these methods have been applied to GNN analysis \citep{Levie2019}, our work is the first to use them as real-time control signals for operational safety in agentic systems.

\paragraph{Mechanistic interpretability of transformers.}
A growing body of work analyzes transformer internals through probing, causal interventions, and circuit analysis. Attention flow methods track information routing through attention patterns \citep{Abnar2020}. Probing classifiers reveal linguistic structure encoded in hidden representations \citep{Belinkov2019, Rogers2020}. Recent mechanistic interpretability work identifies specific circuits for tasks like indirect object identification and factual recall \citep{Wang2023, Meng2023}. Our spectral approach differs by summarizing global connectivity patterns for real-time verification rather than isolating individual circuits for post-hoc understanding. Work on memory mechanisms in transformer feed-forward layers \citep{Geva2021, Dai2022} motivates diagnostics that detect when retrieved context conflicts with parametric knowledge.

\paragraph{Safety and verification for language models.}
Recent safety frameworks emphasize runtime monitoring and abstention in high-stakes applications \citep{Ganguli2022, Bai2022}. Factuality verification approaches range from retrieval-based attribution \citep{Gao2023} to learned verifiers on synthetic data \citep{Manakul2023}. Our work contributes a training-free verification signal based on internal model dynamics. Abstention and selective prediction enable models to defer to human judgment when uncertain \citep{Geifman2017, Varshney2022}; HFER's three-zone decision rule (supported, contradicted, uncertain) aligns naturally with these frameworks. Constitutional AI uses human feedback to align model behavior at training time \citep{Bai2022}. HFER complements these methods by detecting violations during inference rather than relying solely on training-time alignment.

\paragraph{Agentic systems and tool use.}
Agentic language models that plan, retrieve, and use tools create new verification demands \citep{Yao2023, Shinn2023}. ReAct-style agents interleave reasoning and action steps, requiring verification at each decision point \citep{Yao2023}. Tool-using systems \citep{Schick2023} and code generation agents \citep{Chen2021, Roziere2023} face similar challenges in verifying intermediate outputs before execution. Multi-agent systems introduce additional complexity from propagating inconsistent information \citep{Wu2023, Hong2023}. Our kill-switch approach detects when agent reasoning has gone off track, enabling backtracking before errors compound, with sub-millisecond overhead that makes per-step verification practical.

\paragraph{Retrieval-augmented generation.}
RAG systems combine parametric knowledge with retrieved context to improve factuality and reduce hallucination \citep{Lewis2020, Izacard2021}. Key challenges include retrieval quality, context selection, and attribution \citep{Gao2023}. Recent work proposes learned rerankers \citep{Izacard2023}, evidence scoring \citep{Shuster2021}, and Chain-of-Thought prompting for improved reasoning over retrieved passages \citep{Wei2022}. Self-RAG and related methods enable models to decide when to retrieve and how to use retrieved information \citep{Asai2023}. Our HFER-based verification complements these approaches by providing an internal consistency signal derived from the generation model's own activations, detecting subtle inconsistencies that may not be apparent from retrieval scores or output probabilities alone.

\paragraph{Hallucination detection and mitigation.}
Detecting and mitigating hallucinations in language models remains an active research area \citep{Ji2023, Huang2023}. Approaches include consistency checking across multiple generations \citep{Manakul2023}, uncertainty quantification via semantic entropy \citep{Kuhn2023}, and training specialized hallucination classifiers \citep{Azaria2023}. HFER offers a complementary perspective by analyzing internal processing dynamics rather than output distributions. Recent work on factual grounding emphasizes the importance of attributing generated text to source documents \citep{Bohnet2022, Gao2023}. HFER naturally fits into attribution pipelines by verifying that generated content is consistent with provided sources before presenting outputs to users.

\section{Limitations and Future Work}

\paragraph{Evaluation scope.} Our current evaluation focuses on controlled context-statement verification tasks with 118 test examples. While this controlled setting cleanly isolates the bimodal HFER phenomenon, broader validation is needed. Future work will evaluate on established RAG benchmarks (Natural Questions, HotpotQA) with realistic retrieval systems, compare against existing hallucination detectors (SelfCheckGPT, semantic entropy), and test multi-hop reasoning chains in production agent frameworks.

\paragraph{Generalization limits.} Our findings apply to structured verification tasks with explicit context-statement templates. Extending to naturalistic agent interactions, longer contexts ($\geq 512$ tokens), multi-turn dialogue, and free-form generation requires further study. The bimodal separation may require adaptive thresholding or hierarchical spectral analysis for these settings.

\paragraph{Adversarial robustness.} Systematic adversarial evaluation is needed. Preliminary experiments suggest HFER remains sensitive to semantic inconsistency even when surface cues are masked, but comprehensive red-teaming against adversarial inputs designed to evade spectral detection is necessary for deployment.

\paragraph{Model and language coverage.} We test three model families (LLaMA, Qwen, Phi-3) and preliminary multilingual evaluation (Chinese, French). Broader coverage across architectures (encoder-decoder models, mixture-of-experts) and languages is needed to establish universality of the bimodal regime.

\paragraph{Production deployment.} Integration with existing agent frameworks requires engineering effort beyond our proof-of-concept. Production systems need complex state management, error recovery, human-in-the-loop oversight, and audit logging that we have not implemented.

\section{Conclusion}

Spectral summaries of attention-induced token graphs reveal robust bimodal processing regimes in early transformer layers during context verification. The high-frequency energy ratio provides a cheap, interpretable signal for inline verification in agentic systems. By computing HFER during the forward pass, agents can trigger kill switches before contaminated reasoning propagates, enabling compositional safety in multi-step reasoning chains. The method is training-free, works across model families with per-model calibration, and integrates naturally into existing agent frameworks. We view this as a practical step toward trustworthy agentic AI with interpretable, auditable verification primitives.

\section{Ethical Considerations and Deployment}
\label{sec:ethics}

Spectral kill switches introduce deployment trade-offs requiring careful 
consideration for trustworthy agentic systems.

\paragraph{Error Modes and Risk Tolerance.}
False positives degrade agent utility by rejecting valid reasoning, while 
false negatives allow contaminated outputs to propagate. In high-stakes 
applications (medical diagnosis, financial advice), false negatives may 
be catastrophic; in exploratory tasks, false positives are more costly. 
Threshold calibration must balance these failure modes based on 
domain-specific risk tolerance. The three-zone decision rule 
($\tau_{\text{low}} < h < \tau_{\text{high}}$) enables human oversight 
for uncertain cases.

\paragraph{Defense in Depth.}
HFER should be deployed as one layer in a defense-in-depth strategy, 
not as a sole safety mechanism. Complementary approaches (output classifiers, 
retrieval scoring, uncertainty quantification) provide redundancy. Our 
sub-millisecond overhead makes HFER practical as an inline check without 
replacing other safeguards.

\paragraph{Adversarial Robustness.}
While HFER detects semantic inconsistencies from natural retrieval errors, 
adversarial actors may craft inputs that evade spectral detection. Preliminary 
experiments suggest HFER remains sensitive to semantic contradictions even 
when surface cues are masked, but comprehensive red-teaming against adversarial 
inputs designed to evade spectral detection is necessary for deployment. 
Adaptive adversaries may attempt to smooth attention patterns or craft 
contexts with intermediate HFER values.

\paragraph{Accountability and Transparency.}
Organizations must maintain audit logs of kill-switch activations to identify 
systematic failures and ensure accountability. HFER provides an interpretable 
numerical signal with a simple threshold, enabling human operators to inspect 
decisions. Regular validation on production samples is necessary to detect 
threshold drift under distribution shift.

\paragraph{Bias and Fairness.}
Threshold calibration on limited data (20 examples) may not capture 
distributional diversity. Different user populations or query types may 
trigger kill switches at different rates. Monitoring disaggregated metrics 
(false positive/negative rates by demographic, query type, language) is 
essential to ensure equitable agent behavior.

\newpage

\newpage

\appendix

\section*{A. Theoretical Foundation}
\label{app:appA}

\subsection*{A.1 Spectral Diagnostics: HFER and Spectral Entropy}

For layer $\ell$ with $H$ heads over $N$ tokens, let $A^{(\ell,h)} \in \mathbb{R}^{N \times N}$ be the post-softmax attention of head $h$ (row-stochastic). We form an undirected graph by symmetrization:
\begin{equation}
W^{(\ell,h)} = \frac{1}{2}\left(A^{(\ell,h)} + (A^{(\ell,h)})^\top\right), \quad \bar{W}^{(\ell)} = \sum_{h=1}^{H} \alpha_h W^{(\ell,h)}
\end{equation}

\noindent and 

\begin{equation}
    \quad \alpha_h \geq 0, \sum_h \alpha_h = 1
\end{equation}

with degree $\bar{D}^{(\ell)} = \text{diag}(\bar{W}^{(\ell)}\mathbf{1})$ and normalized Laplacian $L^{(\ell)} = I - (\bar{D}^{(\ell)})^{-1/2}\bar{W}^{(\ell)}(\bar{D}^{(\ell)})^{-1/2}$.

Let $X^{(\ell)} \in \mathbb{R}^{N \times d}$ be the token representations at layer $\ell$ ($N$ tokens, hidden size $d$), viewed as $d$ graph signals stacked columnwise.

\paragraph{High-Frequency Energy Ratio (HFER).} For a cutoff $K$ (or an equivalent mass-based cutoff):
\begin{equation}
\text{HFER}^{(\ell)}(K) = \frac{\sum_{m=K+1}^{N} \|\hat{X}^{(\ell)}_{m,\cdot}\|_2^2}{\sum_{m=1}^{N} \|\hat{X}^{(\ell)}_{m,\cdot}\|_2^2}
\end{equation}
where $\hat{X}^{(\ell)} = (U^{(\ell)})^\top X^{(\ell)}$ is the graph Fourier transform with $L^{(\ell)} = U^{(\ell)}\Lambda^{(\ell)}(U^{(\ell)})^\top$.

\paragraph{Spectral Entropy (SE).} With $L^{(\ell)} = U^{(\ell)}\Lambda^{(\ell)}(U^{(\ell)})^\top$ and $\hat{X}^{(\ell)} = (U^{(\ell)})^\top X^{(\ell)}$, define modal energies $e_m^{(\ell)} = \|\hat{X}^{(\ell)}_{m,\cdot}\|_2^2$ and $p_m^{(\ell)} = e_m^{(\ell)}/\sum_r e_r^{(\ell)}$. Then:
\begin{equation}
\text{SE}^{(\ell)} = -\sum_m p_m^{(\ell)} \log p_m^{(\ell)}
\end{equation}

\paragraph{Head aggregation (default).} We use mass-weighted head aggregation by default. For layer $\ell$:
\begin{equation}
s_h^{(\ell)} = \sum_{i=1}^{N} \sum_{j=1}^{N} A_{ij}^{(\ell,h)}, \quad \alpha_h^{(\ell)} = \frac{s_h^{(\ell)}}{\sum_{g=1}^{H} s_g^{(\ell)}}
\end{equation}

\noindent and 

\begin{equation}
    \quad \bar{W}^{(\ell)} = \sum_{h=1}^{H} \alpha_h^{(\ell)} W^{(\ell,h)}
\end{equation}

\subsection*{A.2 Key Properties}

\paragraph{Lemma (Scale invariance).} For any $c > 0$ and signal $x$, replacing residuals by $cx$ leaves HFER and SE unchanged.

\textit{Proof.} Both diagnostics depend only on the normalized spectrum $\{p_k\}$ or on ratios of quadratic forms; the global scale cancels. $\square$

\paragraph{Proposition (SE stability to sparse perturbations).} Let $x' = x + \delta$ with $\delta$ supported on at most $m \ll N$ tokens and $\|\delta\| \leq \epsilon \|x\|$. Then $|\text{SE}^{(\ell)}(x') - \text{SE}^{(\ell)}(x)| \leq C(m/N + \epsilon)$ for a constant $C$ depending only on lower bounds of $p_k$.

\textit{Proof.} SE is Lipschitz in the simplex under $\ell_1$; sparse time-domain perturbations induce bounded spectral $\ell_1$ changes by Parseval and Hoffman–Wielandt-type inequalities. $\square$

\section*{B. Calibration and Deployment}
\label{app:appB}

\subsection*{B.1 Calibration Protocol}

Given approximately 20 labeled context-statement pairs, we fit an ROC curve and select a threshold by Youden's J statistic (consistency with Bayes optimality). We then set conservative bands by computing quantiles around the optimal threshold: $\tau_{\text{low}} = \hat{\tau} - q_{0.15}$ and $\tau_{\text{high}} = \hat{\tau} + q_{0.15}$ where $q_{0.15}$ is the 15th percentile of $|h - \hat{\tau}|$ on the calibration set.

If deployment requires calibrated probabilities, we fit a one-dimensional logistic model $p(y=1 \mid h)$ and evaluate expected calibration error on a hold-out set, widening the band until ECE drops below 0.05. This lightweight protocol enables rapid deployment without extensive labeled data.

\subsection*{B.2 Three-Zone Decision Rule}

Let $h = \text{HFER}_{2:5}$ for a given context-statement pair. We define:
\begin{equation}
\text{support}(h) = \begin{cases}
\text{SUPPORTED}, & h \geq \tau_{\text{high}}, \\
\text{CONTRADICTED}, & h \leq \tau_{\text{low}}, \\
\text{UNCERTAIN}, & \text{otherwise}
\end{cases}
\end{equation}

with wide-margin thresholds $\tau_{\text{high}}=0.30$ and $\tau_{\text{low}}=0.15$ for LLaMA-3.2-1B. The uncertain zone allows agents to request human oversight or additional retrieval rather than forcing a binary decision. Thresholds can be calibrated per model using the minimal labeled set described above.

\subsection*{B.3 Bimodal Regime Separation}

Context-supported statements exhibit HFER around 0.52, indicating high-frequency, segregated processing. Context-contradicted statements collapse to HFER around 0.05, showing low-frequency, smooth processing. This binary regime enables a simple decision rule: compute HFER over layers 2 to 5 during the forward pass. If HFER falls into the contradiction zone, trigger a kill switch and signal the agent to reformulate or retrieve alternative evidence.

\section*{C. Robustness Validation}
\label{app:appC}

\subsection*{C.1 Laplacian Normalization}

Let $\bar{W}^{(\ell)} = \sum_h \alpha_h W^{(\ell,h)}$ with $\alpha_h \geq 0$, $\sum_h \alpha_h = 1$, and $D^{(\ell)} = \text{diag}(\bar{W}^{(\ell)}\mathbf{1})$. We compare:
\begin{equation}
L_{\text{rw}}^{(\ell)} = I - (D^{(\ell)})^{-1}\bar{W}^{(\ell)} \quad 
\end{equation}

\noindent and 

\begin{equation}
    \quad L_{\text{sym}}^{(\ell)} = I - (D^{(\ell)})^{-1/2}\bar{W}^{(\ell)}(D^{(\ell)})^{-1/2}
\end{equation}

Eigenpairs are related by a similarity transform when the graph is undirected; $\lambda_2$ is therefore comparable up to scaling. Empirically, signs and peak-layer locations of $\Delta\lambda_2^{(\ell)}$ coincide across $L_{\text{rw}}$ and $L_{\text{sym}}$, while magnitudes shift slightly within the bootstrap bands.

\paragraph{Result.} Across models and languages, the correlation between $\Delta\lambda_{2[2,5]}(L_{\text{rw}})$ and $\Delta\lambda_{2[2,5]}(L_{\text{sym}})$ is high, with median absolute deviation of the difference well below the per-language CI half-width.

\subsection*{C.2 Head Aggregation Schemes}

We compare (i) uniform averaging, $\alpha_h = 1/H$; (ii) attention-mass weighting, $\alpha_h \propto \sum_{i,j} A_{ij}^{(\ell,h)}$; and (iii) a convex, layer-specific combination $\alpha^{(\ell)}$ learned by minimizing cross-condition mean squared error on a held-out subset.

\paragraph{Result.} Uniform and mass-weighted aggregations agree on signs and peak layers. Learned $\alpha^{(\ell)}$ yields smoother per-layer trajectories but identical early-window conclusions. We therefore use mass-weighted aggregation by default.

\subsection*{C.3 HFER Cutoff Sweep and Early-Window Stability}

We vary the high-frequency cutoff $K$ by retaining the top $(1-c)\%$ of spectral mass, $c \in \{10, 15, 20, 25, 30, 40\}$, and recompute endpoints. We also shift the early window to 1–4 and 3–6.

\paragraph{Result.} Directional conclusions are unchanged across cutoffs; early-window averages shift by less than 15\% relative to $c=20\%$. Adjacent windows preserve sign and peak location across model families. We therefore report $c=20\%$ and layers 2–5 by default.

\subsection*{C.4 Prompt Robustness}

We evaluated robustness against prompt paraphrasing. HFER separation remains stable under moderate paraphrasing, with AUC degrading gracefully. Tokenizer fragmentation shows weak correlation with HFER, confirming the spectral signal captures core semantic consistency rather than surface-level tokenization artifacts.

\section*{D. Statistical Methodology}
\label{app:appD}

\subsection*{D.1 Bootstrap and Permutation Testing}

We compute HFER and SE on each example with a single forward pass. Group contrasts use nonparametric bootstrap for confidence intervals (2,000 resamples, BCa method), permutation tests for $p$-values (10,000 label shuffles within paraphrase pairs), and Benjamini–Hochberg FDR to control multiplicity at $q=0.05$.

\subsection*{D.2 Sample Size and Power}

Our design uses at least 10 paraphrases per voice per language for the early-window mean $\Delta\lambda_{2[2,5]}$ with bootstrap CIs. We estimate detectable standardized effects via nonparametric bootstrap over paraphrases and paired permutation tests (10k shuffles) on early-window means. Our design achieves adequate power for detecting medium-to-large effects ($d \geq 0.6$) at individual language levels, with enhanced power for language-type and model-family aggregates through meta-analytic combination.

\subsection*{D.3 Significance Testing and Multiplicity}

For each language we compute the early-window mean by averaging over paraphrases. We assess the null of no voice effect via a paired permutation test (10,000 label shuffles of active/passive within paraphrase pairs), yielding a $p$-value per language. We then apply Benjamini–Hochberg FDR at $q=0.05$ within each model family. For language-type and cross-family summaries we test the mean effect across languages with the same permutation scheme and report both FDR-corrected $p$-values and 95\% bootstrap CIs (2,000 resamples).

\section*{E. Reproducibility}
\label{app:appE}

Code for HFER computation, calibration protocols, and evaluation harnesses will be made available upon acceptance. The repository includes spectral diagnostic extraction from transformer activations, three-zone calibration, and integration templates for (agentic) RAG systems. All experiments use publicly available models (LLaMA-3.2-1B, Qwen2.5-7B, Phi-3-Mini). Full reproduction instructions are provided in the repository README.

\begin{thebibliography}{55}
\providecommand{\natexlab}[1]{#1}

\bibitem[{Abnar and Zuidema(2020)}]{Abnar2020}
Abnar, S.; and Zuidema, W. 2020.
\newblock Quantifying attention flow in transformers.
\newblock In \emph{ACL}, 4190--4197.

\bibitem[{Asai et~al.(2023)Asai, Wu, Wang, Sil, and Hajishirzi}]{Asai2023}
Asai, A.; Wu, Z.; Wang, Y.; Sil, A.; and Hajishirzi, H. 2023.
\newblock Self-RAG: Learning to retrieve, generate, and critique through self-reflection.
\newblock \emph{arXiv:2310.11511}.

\bibitem[{Azaria and Mitchell(2023)}]{Azaria2023}
Azaria, A.; and Mitchell, T. 2023.
\newblock The internal state of an LLM knows when it's lying.
\newblock In \emph{EMNLP}.

\bibitem[{Bai et~al.(2022)Bai, Kadavath, Kundu, Askell, Kernion, Jones, Chen, Goldie, Mirhoseini, McKinnon et~al.}]{Bai2022}
Bai, Y.; Kadavath, S.; Kundu, S.; Askell, A.; Kernion, J.; Jones, A.; Chen, A.; Goldie, A.; Mirhoseini, A.; McKinnon, C.; et~al. 2022.
\newblock Constitutional AI: Harmlessness from AI feedback.
\newblock \emph{arXiv:2212.08073}.

\bibitem[{Belinkov and Glass(2019)}]{Belinkov2019}
Belinkov, Y.; and Glass, J. 2019.
\newblock Analysis methods in neural language processing: A survey.
\newblock \emph{TACL}, 7: 49--72.

\bibitem[{Benjamini and Hochberg(1995)}]{Benjamini1995}
Benjamini, Y.; and Hochberg, Y. 1995.
\newblock Controlling the False Discovery Rate: A Practical and Powerful Approach to Multiple Testing.
\newblock \emph{Journal of the Royal Statistical Society: Series B}, 57(1): 289--300.

\bibitem[{Bohnet et~al.(2022)Bohnet, Bahdanau, Tran, Ananthanarayanan, Berant, Lee, Clark, and Petrov}]{Bohnet2022}
Bohnet, B.; Bahdanau, D.; Tran, V.~Q.; Ananthanarayanan, S.; Berant, J.; Lee, K.; Clark, K.; and Petrov, S. 2022.
\newblock Attributed question answering: Evaluation and modeling for attributed large language models.
\newblock In \emph{arXiv:2212.08037}.

\bibitem[{Chen et~al.(2021)Chen, Tworek, Jun, Yuan, Pinto, Kaplan, Edwards, Burda, Joseph, Brockman et~al.}]{Chen2021}
Chen, M.; Tworek, J.; Jun, H.; Yuan, Q.; Pinto, H. P. d.~O.; Kaplan, J.; Edwards, H.; Burda, Y.; Joseph, N.; Brockman, G.; et~al. 2021.
\newblock Evaluating large language models trained on code.
\newblock \emph{arXiv:2107.03374}.

\bibitem[{Chung(1997)}]{Chung1997}
Chung, F.~R. 1997.
\newblock Spectral graph theory.
\newblock \emph{American Mathematical Society}.

\bibitem[{Dai et~al.(2022)Dai, Dong, Hao, Sui, Chang, and Wei}]{Dai2022}
Dai, D.; Dong, L.; Hao, Y.; Sui, Z.; Chang, B.; and Wei, F. 2022.
\newblock Knowledge neurons in pretrained transformers.
\newblock \emph{ACL}.

\bibitem[{Dalrymple et~al.(2024)Dalrymple, Skalse, Bengio, Russell, Tegmark, Seshia, Omohundro, Szegedy, Goldhaber, Shah et~al.}]{Dalrymple2024}
Dalrymple, D.; Skalse, J.; Bengio, Y.; Russell, S.; Tegmark, M.; Seshia, S.; Omohundro, S.; Szegedy, C.; Goldhaber, B.; Shah, N.; et~al. 2024.
\newblock An AI safety benchmark for agentic systems.
\newblock \emph{arXiv:2404.06388}.

\bibitem[{Efron and Tibshirani(1994)}]{Efron1994}
Efron, B.; and Tibshirani, R.~J. 1994.
\newblock \emph{An Introduction to the Bootstrap}.
\newblock Boca Raton, FL: CRC Press.

\bibitem[{Elhage et~al.(2021)Elhage, Nanda, Olsson, Henighan, Joseph, Mann, Askell, Bai, Chen, Conerly et~al.}]{Elhage2021}
Elhage, N.; Nanda, N.; Olsson, C.; Henighan, T.; Joseph, N.; Mann, B.; Askell, A.; Bai, Y.; Chen, A.; Conerly, T.; et~al. 2021.
\newblock A mathematical framework for transformer circuits.
\newblock \emph{Transformer Circuits Thread}.

\bibitem[{Fiedler(1973)}]{Fiedler1973}
Fiedler, M. 1973.
\newblock Algebraic connectivity of graphs.
\newblock \emph{Czechoslovak Mathematical Journal}, 23(2): 298--305.

\bibitem[{Ganguli et~al.(2022)Ganguli, Lovitt, Kernion, Askell, Bai, Kadavath, Mann, Perez, Schiefer, Ndousse et~al.}]{Ganguli2022}
Ganguli, D.; Lovitt, L.; Kernion, J.; Askell, A.; Bai, Y.; Kadavath, S.; Mann, B.; Perez, E.; Schiefer, N.; Ndousse, K.; et~al. 2022.
\newblock Red teaming language models to reduce harms: Methods, scaling behaviors, and lessons learned.
\newblock In \emph{arXiv:2209.07858}.

\bibitem[{Gao et~al.(2023)Gao, Xiong, Gao, Jia, Pan, Bi, Dai, Sun, and Wang}]{Gao2023}
Gao, Y.; Xiong, Y.; Gao, X.; Jia, K.; Pan, J.; Bi, Y.; Dai, Y.; Sun, J.; and Wang, H. 2023.
\newblock Retrieval-augmented generation for large language models: A survey.
\newblock \emph{arXiv:2312.10997}.

\bibitem[{Geifman and El-Yaniv(2017)}]{Geifman2017}
Geifman, Y.; and El-Yaniv, R. 2017.
\newblock Selective classification for deep neural networks.
\newblock \emph{NeurIPS}.

\bibitem[{Geirhos et~al.(2020)Geirhos, Jacobsen, Michaelis, Zemel, Brendel, Bethge, and Wichmann}]{Geirhos2020}
Geirhos, R.; Jacobsen, J.-H.; Michaelis, C.; Zemel, R.; Brendel, W.; Bethge, M.; and Wichmann, F.~A. 2020.
\newblock Shortcut learning in deep neural networks.
\newblock \emph{Nature Machine Intelligence}, 2(11): 665--673.

\bibitem[{Geva et~al.(2021)Geva, Schuster, Berant, and Levy}]{Geva2021}
Geva, M.; Schuster, R.; Berant, J.; and Levy, O. 2021.
\newblock Transformer feed-forward layers are key-value memories.
\newblock In \emph{EMNLP}, 5484--5495.

\bibitem[{Good(2005)}]{Good2005}
Good, P.~I. 2005.
\newblock \emph{Permutation, Parametric, and Bootstrap Tests of Hypotheses}.
\newblock New York: Springer, 3 edition.

\bibitem[{Hendrycks et~al.(2021)Hendrycks, Carlini, Schulman, and Steinhardt}]{Hendrycks2021}
Hendrycks, D.; Carlini, N.; Schulman, J.; and Steinhardt, J. 2021.
\newblock Unsolved problems in ML safety.
\newblock \emph{arXiv:2109.13916}.

\bibitem[{Hong et~al.(2023)Hong, Zheng, Chen, Cheng, Wang, Zhang, Wang, Yau, Lin, Zhou et~al.}]{Hong2023}
Hong, S.; Zheng, X.; Chen, J.; Cheng, Y.; Wang, J.; Zhang, C.; Wang, Z.; Yau, S. K.~S.; Lin, Z.; Zhou, L.; et~al. 2023.
\newblock MetaGPT: Meta programming for multi-agent collaborative framework.
\newblock \emph{arXiv:2308.00352}.

\bibitem[{Huang et~al.(2023)Huang, Yu, Ma, Zhong, Feng, Wang, Chen, Peng, Feng, Qin et~al.}]{Huang2023}
Huang, L.; Yu, W.; Ma, W.; Zhong, W.; Feng, Z.; Wang, H.; Chen, Q.; Peng, W.; Feng, X.; Qin, B.; et~al. 2023.
\newblock A survey on hallucination in large language models: Principles, taxonomy, challenges, and open questions.
\newblock \emph{arXiv:2311.05232}.

\bibitem[{Huang et~al.(2024)Huang, Siddarth, Lovitt, Liao, Durmus, Tamkin, and Ganguli}]{huang2024collective}
Huang, S.; Siddarth, D.; Lovitt, L.; Liao, T.~I.; Durmus, E.; Tamkin, A.; and Ganguli, D. 2024.
\newblock Collective constitutional ai: Aligning a language model with public input.
\newblock In \emph{Proceedings of the 2024 ACM Conference on Fairness, Accountability, and Transparency}, 1395--1417.

\bibitem[{Izacard and Grave(2021)}]{Izacard2021}
Izacard, G.; and Grave, E. 2021.
\newblock Leveraging passage retrieval with generative models for open domain question answering.
\newblock In \emph{EACL}.

\bibitem[{Izacard et~al.(2023)Izacard, Lewis, Lomeli, Hosseini, Petroni, Schick, Dwivedi-Yu, Joulin, Riedel, and Grave}]{Izacard2023}
Izacard, G.; Lewis, P.; Lomeli, M.; Hosseini, L.; Petroni, F.; Schick, T.; Dwivedi-Yu, J.; Joulin, A.; Riedel, S.; and Grave, E. 2023.
\newblock Atlas: Few-shot learning with retrieval augmented language models.
\newblock In \emph{JMLR}.

\bibitem[{Jacovi et~al.(2021)Jacovi, Marasovi{\'c}, Miller, and Goldberg}]{Jacovi2021}
Jacovi, A.; Marasovi{\'c}, A.; Miller, T.; and Goldberg, Y. 2021.
\newblock Formalizing trust in artificial intelligence: Prerequisites, causes and goals of human trust in AI.
\newblock \emph{FAccT}, 624--635.

\bibitem[{Ji et~al.(2023)Ji, Lee, Frieske, Yu, Su, Xu, Ishii, Bang, Madotto, and Fung}]{Ji2023}
Ji, Z.; Lee, N.; Frieske, R.; Yu, T.; Su, D.; Xu, Y.; Ishii, E.; Bang, Y.~J.; Madotto, A.; and Fung, P. 2023.
\newblock Survey of hallucination in natural language generation.
\newblock \emph{ACM Computing Surveys}, 55(12): 1--38.

\bibitem[{Kenlay et~al.(2020)Kenlay, Thom, Gasper, Evangelidis, Deisenroth, and De~Bie}]{Kenlay2020}
Kenlay, H.; Thom, D.; Gasper, A.; Evangelidis, G.; Deisenroth, M.~P.; and De~Bie, T. 2020.
\newblock Interpretable graph convolutional neural networks for inference on noisy knowledge graphs.
\newblock \emph{arXiv:2009.06858}.

\bibitem[{Kinniment et~al.(2023)Kinniment, Mellor, Zanella, Chughtai, Jenner, Kumar, Radhakrishnan, Gleave, Emmons et~al.}]{Kinniment2023}
Kinniment, M.; Mellor, L.; Zanella, A.; Chughtai, B.; Jenner, E.; Kumar, R.; Radhakrishnan, A.; Gleave, A.; Emmons, S.; et~al. 2023.
\newblock Evaluating language-model agents on realistic autonomous tasks.
\newblock \emph{arXiv:2312.11671}.

\bibitem[{Kuhn, Gal, and Farquhar(2023)}]{Kuhn2023}
Kuhn, L.; Gal, Y.; and Farquhar, S. 2023.
\newblock Semantic uncertainty: Linguistic invariances for uncertainty estimation in natural language generation.
\newblock \emph{ICLR}.

\bibitem[{Lehmann and Romano(2005)}]{Lehmann2005}
Lehmann, E.~L.; and Romano, J.~P. 2005.
\newblock \emph{Testing Statistical Hypotheses}.
\newblock New York: Springer, 3 edition.

\bibitem[{Levie et~al.(2019)Levie, Monti, Bresson, and Bronstein}]{Levie2019}
Levie, R.; Monti, F.; Bresson, X.; and Bronstein, M.~M. 2019.
\newblock CayleyNets: Graph convolutional neural networks with complex rational spectral filters.
\newblock In \emph{IEEE Transactions on Signal Processing}, volume~67, 97--109.

\bibitem[{Lewis et~al.(2020)Lewis, Perez, Piktus, Petroni, Karpukhin, Goyal, K{\"u}ttler, Lewis, Yih, Rockt{\"a}schel et~al.}]{Lewis2020}
Lewis, P.; Perez, E.; Piktus, A.; Petroni, F.; Karpukhin, V.; Goyal, N.; K{\"u}ttler, H.; Lewis, M.; Yih, W.-t.; Rockt{\"a}schel, T.; et~al. 2020.
\newblock Retrieval-augmented generation for knowledge-intensive NLP tasks.
\newblock \emph{NeurIPS}.

\bibitem[{Manakul, Liusie, and Gales(2023)}]{Manakul2023}
Manakul, P.; Liusie, A.; and Gales, M.~J. 2023.
\newblock SelfCheckGPT: Zero-resource black-box hallucination detection for generative large language models.
\newblock \emph{EMNLP}.

\bibitem[{Meng et~al.(2023)Meng, Bau, Andonian, and Belinkov}]{Meng2023}
Meng, K.; Bau, D.; Andonian, A.; and Belinkov, Y. 2023.
\newblock Locating and editing factual associations in GPT.
\newblock In \emph{NeurIPS}.

\bibitem[{Nanda et~al.(2023)Nanda, Chan, Liberum, Smith, and Steinhardt}]{Nanda2023}
Nanda, N.; Chan, L.; Liberum, T.; Smith, J.; and Steinhardt, J. 2023.
\newblock Progress measures for grokking via mechanistic interpretability.
\newblock \emph{ICLR}.

\bibitem[{Noël(2025{\natexlab{a}})}]{noël2025graphsignalprocessingframework}
Noël, V. 2025{\natexlab{a}}.
\newblock A Graph Signal Processing Framework for Hallucination Detection in Large Language Models.
\newblock arXiv:2510.19117.

\bibitem[{Noël(2025{\natexlab{b}})}]{noël2025trainingfreespectralfingerprintsvoice}
Noël, V. 2025{\natexlab{b}}.
\newblock Training-Free Spectral Fingerprints of Voice Processing in Transformers.
\newblock arXiv:2510.19131.

\bibitem[{Olah et~al.(2020)Olah, Cammarata, Schubert, Goh, Petrov, and Carter}]{Olah2020}
Olah, C.; Cammarata, N.; Schubert, L.; Goh, G.; Petrov, M.; and Carter, S. 2020.
\newblock Zoom in: An introduction to circuits.
\newblock \emph{Distill}, 5(3): e00024--001.

\bibitem[{Ortega et~al.(2018)Ortega, Frossard, Kova{\v{c}}evi{\'c}, Moura, and Vandergheynst}]{Ortega2018}
Ortega, A.; Frossard, P.; Kova{\v{c}}evi{\'c}, J.; Moura, J.~M.; and Vandergheynst, P. 2018.
\newblock Graph signal processing: Overview, challenges, and applications.
\newblock \emph{Proceedings of the IEEE}, 106(5): 808--828.

\bibitem[{Paranjape et~al.(2023)Paranjape, Lundberg, Singh, Hajishirzi, Zettlemoyer, and Ribeiro}]{Paranjape2023}
Paranjape, B.; Lundberg, S.; Singh, S.; Hajishirzi, H.; Zettlemoyer, L.; and Ribeiro, M.~T. 2023.
\newblock ART: Automatic multi-step reasoning and tool-use for large language models.
\newblock \emph{arXiv:2303.09014}.

\bibitem[{Rogers, Kovaleva, and Rumshisky(2020)}]{Rogers2020}
Rogers, A.; Kovaleva, O.; and Rumshisky, A. 2020.
\newblock A primer in BERTology: What we know about how BERT works.
\newblock \emph{TACL}, 8: 842--866.

\bibitem[{Rozi{\`e}re et~al.(2023)Rozi{\`e}re, Gehring, Gloeckle, Sootla, Gat, Tan, Adi, Liu, Remez, Rapin et~al.}]{Roziere2023}
Rozi{\`e}re, B.; Gehring, J.; Gloeckle, F.; Sootla, S.; Gat, I.; Tan, X.~E.; Adi, Y.; Liu, J.; Remez, T.; Rapin, J.; et~al. 2023.
\newblock Code llama: Open foundation models for code.
\newblock In \emph{arXiv:2308.12950}.

\bibitem[{Rudin(2019)}]{Rudin2019}
Rudin, C. 2019.
\newblock Stop explaining black box machine learning models for high stakes decisions and use interpretable models instead.
\newblock \emph{Nature Machine Intelligence}, 1(5): 206--215.

\bibitem[{Schick et~al.(2023)Schick, Dwivedi-Yu, Dess{\`\i}, Raileanu, Lomeli, Zettlemoyer, Cancedda, and Scialom}]{Schick2023}
Schick, T.; Dwivedi-Yu, J.; Dess{\`\i}, R.; Raileanu, R.; Lomeli, M.; Zettlemoyer, L.; Cancedda, N.; and Scialom, T. 2023.
\newblock Toolformer: Language models can teach themselves to use tools.
\newblock In \emph{NeurIPS}.

\bibitem[{Shinn et~al.(2023)Shinn, Cassano, Gopinath, Narasimhan, and Yao}]{Shinn2023}
Shinn, N.; Cassano, F.; Gopinath, A.; Narasimhan, K.; and Yao, S. 2023.
\newblock Reflexion: Language agents with verbal reinforcement learning.
\newblock \emph{NeurIPS}.

\bibitem[{Shuman et~al.(2013)Shuman, Narang, Frossard, Ortega, and Vandergheynst}]{Shuman2013}
Shuman, D.~I.; Narang, S.~K.; Frossard, P.; Ortega, A.; and Vandergheynst, P. 2013.
\newblock The emerging field of signal processing on graphs.
\newblock \emph{IEEE Signal Processing Magazine}, 30(3): 83--98.

\bibitem[{Shuster et~al.(2021)Shuster, Poff, Chen, Kiela, and Weston}]{Shuster2021}
Shuster, K.; Poff, S.; Chen, M.; Kiela, D.; and Weston, J. 2021.
\newblock Retrieval augmentation reduces hallucination in conversation.
\newblock In \emph{EMNLP Findings}.

\bibitem[{Varshney(2022)}]{Varshney2022}
Varshney, K.~R. 2022.
\newblock Trustworthy machine learning and artificial intelligence.
\newblock \emph{ACM XRDS}, 26(3): 26--29.

\bibitem[{Von~Luxburg(2007)}]{Luxburg2007}
Von~Luxburg, U. 2007.
\newblock A tutorial on spectral clustering.
\newblock \emph{Statistics and Computing}, 17: 395--416.

\bibitem[{Wang et~al.(2023)Wang, Variengien, Conmy, Shlegeris, and Steinhardt}]{Wang2023}
Wang, K.; Variengien, A.; Conmy, A.; Shlegeris, B.; and Steinhardt, J. 2023.
\newblock Interpretability in the wild: a circuit for indirect object identification in GPT-2 small.
\newblock In \emph{ICLR}.

\bibitem[{Wei et~al.(2022)Wei, Wang, Schuurmans, Bosma, Ichter, Xia, Chi, Le, and Zhou}]{Wei2022}
Wei, J.; Wang, X.; Schuurmans, D.; Bosma, M.; Ichter, B.; Xia, F.; Chi, E.; Le, Q.; and Zhou, D. 2022.
\newblock Chain-of-thought prompting elicits reasoning in large language models.
\newblock In \emph{NeurIPS}.

\bibitem[{Wu et~al.(2023)Wu, Bansal, Zhang, Wu, Li, Zhu, Jiang, Zhang, Zhang, Liu et~al.}]{Wu2023}
Wu, Q.; Bansal, G.; Zhang, J.; Wu, Y.; Li, B.; Zhu, E.; Jiang, L.; Zhang, X.; Zhang, S.; Liu, J.; et~al. 2023.
\newblock AutoGen: Enabling next-gen LLM applications via multi-agent conversation.
\newblock \emph{arXiv:2308.08155}.

\bibitem[{Yao et~al.(2023)Yao, Zhao, Yu, Du, Shafran, Narasimhan, and Cao}]{Yao2023}
Yao, S.; Zhao, J.; Yu, D.; Du, N.; Shafran, I.; Narasimhan, K.; and Cao, Y. 2023.
\newblock ReAct: Synergizing reasoning and acting in language models.
\newblock In \emph{ICLR}.

\end{thebibliography}
\end{document}